\documentclass{article}

\usepackage[utf8]{inputenc} 
\usepackage[T1]{fontenc}      
\usepackage{url}            
\usepackage{booktabs}       
\usepackage{amsfonts}       
\usepackage{nicefrac}       
\usepackage{microtype}      
\usepackage{lipsum}		
\usepackage{graphicx}
\usepackage{natbib}
\usepackage{doi}

\usepackage{arxiv}
\usepackage{times}
\usepackage{soul}
\usepackage{url}
\usepackage{hyperref}
\usepackage[small]{caption}
\usepackage{graphicx}
\usepackage{amsmath, amssymb}
\usepackage{amsthm}
\usepackage{booktabs}
\usepackage{algorithm}
\usepackage{algorithmic}
\usepackage[switch]{lineno}
\usepackage{bbding}
\usepackage{todonotes}

\newtheorem{definition}{Definition}
\newtheorem{theorem}{Theorem}
\newtheorem{proposition}{Proposition}

\newtheorem{remark}{Remark}

\newcommand{\R}{\mathbb{R}}
\newcommand{\E}{\mathbb{E}}
\newcommand{\indep}{\perp\!\!\!\perp}

\newcommand{\T}{\mathcal{T}}
\newcommand{\G}{\mathcal{G}}
\newcommand{\X}{\mathbf{X}}
\newcommand{\tglob}{\mathbf{T}^{\mathrm{glob}}}
\newcommand{\tlocal}{\mathbf{T}^{\mathrm{loc}}}
\newcommand{\supp}{\mathrm{supp}}

\title{The Failures of Marginal Influence-Based Attribution Methods for Global Time Series Explanations}

\author{ \href{https://orcid.org/0009-0008-5195-090X}{\includegraphics[scale=0.06]{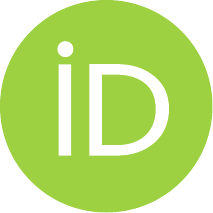}\hspace{1mm}Amadeo Tunyi} \\
	XITASO GmbH \\
	Austraße 35, 86153 Augsburg, Germany \\
	\texttt{amadeo.tunyi@xitaso.com} \\
}

\hypersetup{
pdftitle={A template for the arxiv style},
pdfsubject={q-bio.NC, q-bio.QM},
pdfauthor={David S.~Hippocampus, Elias D.~Striatum},
pdfkeywords={First keyword, Second keyword, More},
}

\begin{document}

\maketitle

\begin{abstract}
Explainability methods for time series models predominantly produce flat attribution 
scores: they quantify the direct influence of a feature at a timestamp by a scalar. We prove that the dominant failure mode of such methods is not the scalar format itself but a fundamental computational mismatch: existing 
methods compute scores via marginal conditioning or off-manifold gradients, both of 
which conflate direct temporal dependencies with mediated ones under autocorrelation. We also define DAG-faithfulness: an explanation is DAG-faithful if the temporal dependency graph it encodes is 
Markov-equivalent to the temporal directed acyclic graph (DAG) implicitly learned by 
the model. Particularly, we observe that standard attribution methods, specifically SHAP, are not DAG-faithful in general, and that recent 
time-series-aware extensions inherit the same computational limitation.
\end{abstract}

\section{Introduction}

The deployment of deep learning on time series data in domains ranging from medical monitoring to financial forecasting to industrial fault detection has created pressing demand for explanations that are both human-comprehensible and technically faithful to the model's reasoning process. The field of explainable AI (XAI) has produced a rich toolkit of attribution methods, yet the dominant paradigm remains \emph{importance scoring}: each input feature at each timestep receives a scalar value indicating its "direct" contribution to a given prediction.

We contend that this paradigm is structurally inadequate for time series models. The reasoning of a temporal model, whether an RNN, a Transformer, or a temporal convolutional network, is inherently relational: what matters is not merely \emph{that} a feature took a particular value, but \emph{how} its value at one time influenced variables at another time, conditioned on the broader temporal context. Most existing importance-scoring methods do not express this; they ignore a high-dimensional relational structure for a one-dimensional summary, potentially discarding the conditional dependency information that constitutes the model's actual reasoning.

This paper makes the following contributions:
\begin{enumerate}
    \item We formalize the computational gap between existing flat attribution scoring methods and the conditional dependency structure of temporal models and on-manifold dependency (Section~\ref{sec:gap}).
    \item We define \emph{DAG-faithfulness} as the appropriate faithfulness criterion, together with admissibility conditions that constrain candidate attribution scoring methods (Section~\ref{sec:framework}).
    \item We show that standard attribution methods fail DAG-faithfulness by construction, that time-series-aware extensions inherit the same failure mode.
    \item We outline an evaluation protocol and identify open research challenges, centrally the construction of admissible and DAG-faithful instantiations (Sections~\ref{sec:evaluation} and~\ref{sec:discussion}).
\end{enumerate}

\section{Background and Related Work}

\subsection{XAI for Time Series}
Existing XAI methods for time series fall into four broad families. \emph{Gradient-based methods}~\cite{simonyan2014deep,sundararajan2017axiomatic} compute the sensitivity of the output with respect to each input dimension, yielding a saliency map of shape $T \times D$. \emph{Perturbation-based methods}~\cite{ribeiro2016why,lundberg2017unified} estimate feature importance by measuring output changes under input occlusion, again collapsing to scalar attributions per feature--time pair. \emph{Attention-based methods}~\cite{vaswani2017attention} expose the model's internal attention weights, which form a matrix over timesteps; whether these weights constitute faithful explanations remains contested~\cite{jain2019attention,wiegreffe2019attention,abnar2020quantifying}.

A fourth, more recent, family of \emph{temporally-aware attribution methods} explicitly accounts for temporal dependence in the computation of importance. FIT~\cite{tonekaboni2020what} quantifies importance via KL-divergence of the predictive distribution under feature occlusion. Dynamask~\cite{crabbe2021explaining} learns near-binary perturbation masks over the input sequence. WinIT~\cite{leung2023temporal} extends feature-removal scores over sliding windows to capture delayed effects. TimeSHAP~\cite{bento2021timeshap}  and ShapTime~\cite{ZhangSQLMP23} adapt SHAP to sequence models with event-level and feature-level Shapley values. These methods move beyond generic saliency by modeling the temporal structure of the input, but as we show in Section~\ref{sec:gap}, some of these methods fail under auto-correlation: they cannot tell between mediated effect on $X_j^{(t)}$ of $X_i^{(t-k)}$ \emph{via} $X_l^{(t-k')}$, $0 < k' \le k$, $i, j, l \in [D]$ and direct effect.

\subsection{Causal Structure in Time Series}

Granger causality~\cite{granger1969investigating} operationalizes temporal influence as predictive improvement: $X$ Granger-causes $Y$ if past values of $X$ improve the forecast of $Y$ beyond past values of $Y$ alone. Structural causal models~\cite{peters2017elements} and dynamic Bayesian networks~\cite{murphy2002dynamic} provide richer frameworks for representing conditional independence in temporal data. Modern causal discovery methods for time series, including constraint-based approaches such as PCMCI+~\cite{runge2020discovering} and score-based approaches such as DYNOTEARS~\cite{pamfil2020dynotears} and Rhino~\cite{gong2023rhino}, operate on the raw data to recover its underlying dependency structure. Our framework connects XAI faithfulness to these causal structures but reverses the target: rather than discovering dependencies in the data, we seek to recover the dependencies that a trained model has learned, grounding explanation quality in the model's learned probabilistic geometry rather than its output value alone.

\subsection{Faithfulness in XAI}

Existing faithfulness criteria are predominantly \emph{output-centric}: an explanation is faithful if perturbing highly-attributed features changes the output more than perturbing low-attributed features~\cite{samek2016evaluating,alvarez2018robustness}. This criterion is necessary but insufficient; a method can pass it while misrepresenting the model's internal reasoning structure~\cite{kindermans2019reliability,jacovi2020towards}. A parallel line of work on mechanistic and causal-abstraction-based faithfulness~\cite{geiger2021causal} moves toward structure-centric criteria by asking whether an explanation preserves the model's internal causal structure under interventions. We contribute to this shift by proposing a faithfulness criterion specific to temporal models: the explanation must recover the model's conditional independence structure, not merely correlate with its outputs.

\section{The Computational Gap}
\label{sec:gap}
This section makes the central claim precise. We first fix notation, then define the class of methods we target (flat attribution) and the structure they ought to recover (the model dependency graph). We then isolate a property that any faithful explanation (DAG faithfulnes) must have and prove that no flat method can hold it. A proof argument is presented in the appendix.

Denote $[D] = \{1, \dots, D\}$ and $[T] = \{1, \dots, T\}$. Let $\X = \{X_i^{(t)}\}_{i\in [D]\; t\in [T]}$ denote a multivariate time series with $D$ variables and $T$ timesteps. Let $f: \mathbb R^{T \times D} \to \mathbb R^{W \times D}$ be a trained time series model with $W$ the forecast window size, that takes such a window as input and produces a $W \times D$-dimensional forecast at the next $W$ time steps. Throughout, $p(\mathbf{x})$ denotes the \emph{reference distribution} over inputs against which the explanation is defined: a probability measure on $\R^{T\times D}$ with respect to which all expectations, conditional independences, and support statements are taken. In practice, $p$ is the deployment distribution of $f$, approximated empirically by the evaluation set on which the global explanation is computed.

The methods we critique share one feature: each score is a function of a single input coordinate considered in isolation, whether through marginal conditioning or through a gradient. We name this class precisely.

\begin{definition}[Flat Attribution]
A flat attribution method produces a map $\phi: \mathbb R^{T \times D} \to \mathbb R^{T \times D}$, assigning a scalar importance score $\phi_i^{(t)}(\mathbf{x})$ to each variable-time pair $(X_i, t)$ for a given input $\mathbf{x}$ via marginal or gradient based methods.
\end{definition}

The label ``flat'' refers not to the scalar output but to the computation: the score for $(X_i,t)$ never conditions on the value of the other inputs that may mediate its effect.
Contrary to classic XAI or even XAI for time series classification that use flat attribution scoring, XAI for forecasting involves not only deriving feature relevance but also understanding it with respect to the model's time-dependent forecasting dynamics; i.e., an appropriate temporal-aware XAI method should be able to properly portray the data dynamics as learned by the model in a faithful way. The most interpretable way of portraying such a dynamic is via a Directed Acyclic Graph (DAG), namely a \emph{Model Dependency Graph} that encodes forecasting dependencies via edges between vertices which we now define.

\begin{definition}[Model Dependency Graph (MDDAG)]
\label{def:model-dag}
Let $f: \mathbb{R}^{T \times D} \to \mathbb{R}^{W \times D}$ be a trained 
time series model and let $p(\mathbf{x})$ be the reference distribution. 
The \emph{model dependency graph (MDDAG)} $\mathcal{G}_f$ is a directed acyclic 
graph with node set
\[
V = \bigl\{ X_i^{(t)} : i \in [D],\ t \in [T+W] \bigr\},
\]
where the $D \cdot T$ nodes with $t \in [T]$ are input positions and the 
$D \cdot W$ nodes at $t \in \{T+1, \ldots, T+W\}$ are forecast targets 
(sinks). A directed edge $X_i^{(t_1)} \to X_j^{(T+w)}$, $j \in [D]$ with $t_1 \in [T]$ 
and $w \in [W]$ belongs to $\mathcal{G}_f$ if and only if
\[
X_i^{(t_1)} \not\perp\!\!\!\perp f(\mathbf{X})_{j,w} 
\;\Big|\; 
\mathbf{X} \setminus \{X_i^{(t_1)}\},\ 
f(\mathbf{X})_{-j,w},\ 
f(\mathbf{X})_{\cdot,\neq w}
\]
under $p(\mathbf{x})$, where $f(\mathbf{X})_{j,w}$ denotes the model's 
forecast for variable $j$ at horizon $w$, $f(\mathbf{X})_{-j,w}$ denotes 
the forecasts of all other variables at the same horizon $w$, and 
$f(\mathbf{X})_{\cdot,\neq w}$ denotes the full joint forecast at all 
other horizons $w' \neq w$. Acyclicity holds by construction: edges run 
only from input positions $t \in [T]$ to forecast targets 
$t \in \{T+1,\ldots,T+W\}$. An illustration is presented in Figure~\ref{fig:MDDAG}.
\end{definition}

\begin{figure}[ht]
    \centering
    \includegraphics[scale=1.0]{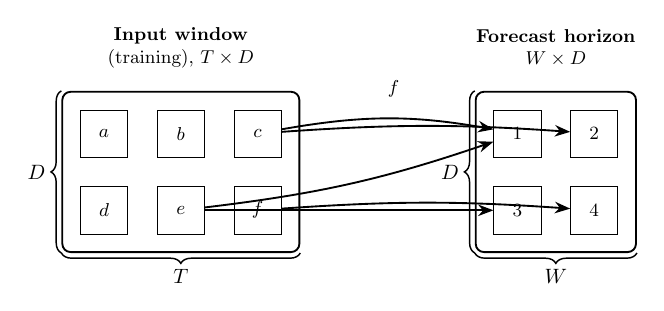}
    \caption{\textbf{The model dependency graph $G_f$.} A trained model
$f:\mathbb{R}^{T\times D}\!\to\!\mathbb{R}^{W\times D}$ maps an input (training)
window of $T$ timesteps over $D$ variables to a $W\times D$ forecast horizon. An
edge $X_i^{(t_1)}\!\to\!X_j^{(T+w)}$ is present iff input cell $i$ at time $t_1$
has a \emph{direct} influence on forecast cell $j$ that is not mediated by any
other input. Here the model has learned the dependencies
$c\!\to\!1$, $c\!\to\!2$, $e\!\to\!1$, $e\!\to\!3$, $f\!\to\!4$; the remaining
inputs ($a,b,d$) are edge-free, exerting no direct influence on the forecast.}

    \label{fig:MDDAG}
\end{figure}
Faithfulness now becomes the guarantee that the XAI method properly learns this model dependency graph. We introduce a novel metric, \emph{DAG-faithfulness}, to quantify this guarantee, using notions from graph theory and causal inference. DAG-faithfulness examines if the dependencies learned by the XAI match the edges in the MDDAG.

\begin{definition}[DAG-Faithfulness]
\label{def:faithfulness}
An explanation method is \emph{DAG-faithful} if the dependency graph $\G_{\T}$ it induces encodes the exact same conditional independencies (Markov Equivalence) as the MDDAG $\G_f$. That is, for all input--forecast pairs $(X_i^{(t_1)}, X_j^{(T+w)})$ with $t_1 \in [T]$, $w \in [W]$:
\[
    X_i^{(t_1)} \indep_{\G_{\T}} X_j^{(T+w)} \mid \mathrm{rest}
    \iff
    X_i^{(t_1)} \indep_{\G_f} X_j^{(T+w)} \mid \mathrm{rest}
\]
where $\indep_{\G}$ denotes independence in graph $\G$.
\end{definition}

\begin{figure}
    \centering
    \includegraphics[scale=0.8]{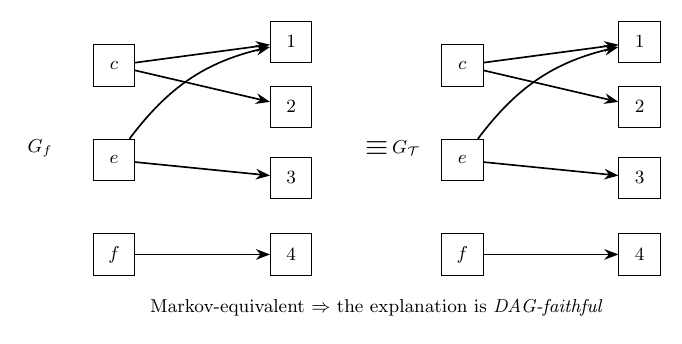}
    \caption{\textbf{DAG-faithfulness.} A global explanation is DAG-faithful when the
graph $G_{\mathcal{T}}$ induced by its transition tensor is Markov-equivalent to
the model dependency graph $G_f$. Here the two coincide: every source–target
dependency the model has learned ($c\!\to\!1$, $c\!\to\!2$, $e\!\to\!1$,
$e\!\to\!3$, $f\!\to\!4$) is recovered, and the edge-free inputs are correctly
left unconnected, so the explanation neither hallucinates nor misses an edge.}
    \label{fig:dag_faithfulness}
\end{figure}
\begin{remark}
DAG-faithfulness strictly implies output-centric faithfulness. If an explanation recovers the correct conditional independence structure, then perturbing edges in $\G_{\T}$ will correspond to perturbing the model's internal dependencies, which will necessarily affect outputs. The converse does not hold.
\end{remark}
Requiring Markov equivalence is a rather strict constraint\footnote{Requiring Markov equivalence rather than exact graph recovery is deliberate and principled: it is the strongest structural claim that can be made from observational data alone, as multiple DAGs with different edge orientations may be indistinguishable without interventional data~\cite{pearl2009causality}.}; however, DAG faithfulness can be reduced to two fundamental admissibility conditions
\begin{itemize}
    \item \textbf{(C1) On-manifold evaluation.} The attribution functional depends on $f$ only through its behavior on the data distribution support $\supp(p)$. Two models that agree on $\supp(p)$ must receive identical attributions; in particular, the attribution must not depend on off-manifold reparametrizations of $f$.
    \item \textbf{(C2) Conditional separation.} The attribution to input $X_i^{(t_1)}$ vanishes whenever $X_i^{(t_1)}$ is conditionally independent of $f(\X)$ given the remaining inputs under $p(\mathbf{x})$.
\end{itemize}
In what follows, we show that no flat attribution method, as defined, can hold either C1 or C2, or both. First, we make the following claim
\paragraph{Claim (thresholding commits the explanation to a graph).}
Any flat attribution method $\phi$ assigns a single global score $\Phi^f_{i,\,t_1}$ to each input variable-time pair $(X_i, t_1)$ with $t_1 \in [T]$. Thresholding this score at $\tau \geq 0$ induces a graph $\G_\phi^\tau$ in which $(X_i, t_1)$ is connected to the forecast if and only if $\Phi^f_{i,\,t_1} > \tau$. A faithful explanation, therefore, requires:
\[
\Phi^f_{i,\,t_1} > \tau \iff \exists\, j \in [D] : X_i^{(t_1)} \to X_j^{(T+1)} \in \G_f,
\]
i.e., the attribution exceeds threshold iff the input has at least one direct edge to some forecast target (the specific $j$ is retrievable by applying the attribution method per output coordinate). In the single-variable setting ($D = 1$), the existential collapses to $X^{(t_1)} \to X^{(T+1)} \in \G_f$, and we write $\Phi^f_{t_1}$ for the scalar attribution (formally defined in \eqref{eq:marginal} below).
\subsection{Case Studies}
Consider two scenarios~\footnote{Codes available at \href{https://github.com/AmTuTi1999/FMIBAMGTSE/}{https://github.com/AmTuTi1999/FMIBAMGTSE/}}
\paragraph{Scenario I:} Let $D = 1$ (single variable, so we drop the $i$ index) and $T = 2$, with input window $\{X^{(1)}, X^{(2)}\}$ and forecast target at position $T+1 = 3$. Suppose the data-generating process satisfies
\begin{equation}
    X^{(2)} = \delta\,X^{(1)} + \varepsilon, \quad
    \varepsilon \sim \mathcal{N}(0,1), \;
    \varepsilon \indep X^{(1)}, \; \delta \neq 0,
    \label{eq:dgp}
\end{equation}
inducing temporal autocorrelation $\mathrm{Cov}(X^{(2)}, X^{(1)}) \neq 0$. Define
\begin{align}
    f_1(\mathbf{x}) &= \beta\,x^{(2)}, \label{eq:f1} \\
    f_2(\mathbf{x}) &= \beta\,x^{(2)} + \gamma\,x^{(1)}, \label{eq:f2}
\end{align}
with $\beta, \gamma \neq 0$. Applying Definition~\ref{def:model-dag} directly: conditioning on $X^{(2)}$, $f_1(\X) = \beta X^{(2)}$ is deterministic in $X^{(2)}$ alone, so $X^{(1)} \indep f_1(\X) \mid X^{(2)}$ and $X^{(1)} \to X^{(3)} \notin \G_{f_1}$. For $f_2$, the same conditioning leaves a $\gamma X^{(1)}$ term, so $X^{(1)} \to X^{(3)} \in \G_{f_2}$. Hence $\G_{f_1} \not\cong \G_{f_2}$, and (C2) requires zero attribution to $X^{(1)}$ under $f_1$ and nonzero under $f_2$.
\begin{figure}[ht]
    \centering
    \includegraphics[scale=0.8]{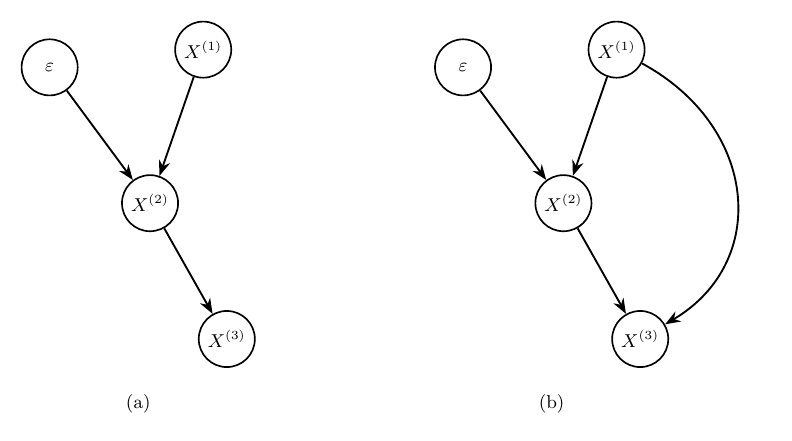}
    \caption{\textbf{SCENARIO I: data-generating structure and the two
models.} Under the process $X^{(2)}=\delta X^{(1)}+\varepsilon$ with
$\varepsilon\!\perp\!\!\!\perp\!X^{(1)}$, the input $X^{(1)}$ influences the
forecast $X^{(3)}$ only through the mediator $X^{(2)}$ (autocorrelation).
\textbf{(a)} $G_{f_1}$ for $f_1=\beta X^{(2)}$: the only edge into the forecast
is $X^{(2)}\!\to\!X^{(3)}$, so $X^{(1)}$ has no direct effect. \textbf{(b)}
$G_{f_2}$ for $f_2=\beta X^{(2)}+\gamma X^{(1)}$: a genuine direct edge
$X^{(1)}\!\to\!X^{(3)}$ is present. The two graphs differ in exactly one edge,
yet a flat marginal score assigns $X^{(1)}$ the same value under both, unable to
separate them.}
\label{fig:running-example-models}
\end{figure}

\paragraph{Scenario II:}  Let $D = 1$, $T = 2$ with input window $\{X^{(1)}, X^{(2)}\}$ and forecast target $X^{(3)}$, and assume the deterministic data-generating process
\begin{equation}
X^{(2)} = \delta\,X^{(1)}, \qquad \delta \neq 0,
\label{eq:dgp-det}
\end{equation}
so $\supp(p)$ is the line $\{(x^{(1)}, x^{(2)}) : x^{(2)} = \delta x^{(1)}\}$ in $\R^2$. Define
\begin{align*}
f_1(\mathbf{x}) &= \beta\,x^{(2)}, \\
f_3(\mathbf{x}) &= \beta\,x^{(2)} + \gamma\,\bigl(x^{(2)} - \delta\,x^{(1)}\bigr).
\end{align*}
On $\supp(p)$, $x^{(2)} - \delta x^{(1)} = 0$ identically, so $f_3 = f_1$ pointwise on $\supp(p)$. Hence $\G_{f_3} = \G_{f_1}$, and the edge $X^{(1)} \to X^{(3)}$ is absent from both. (C2) is satisfied: a method conditioning on the actual conditional distribution would assign zero to $X^{(1)}$ under both models.
\begin{figure}[ht]
    \centering
    \includegraphics[scale=0.9]{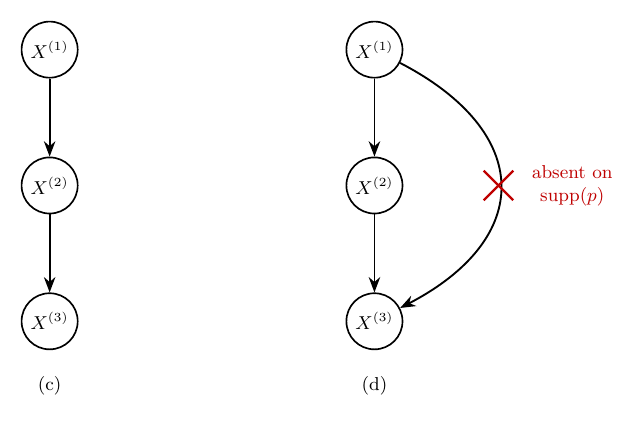}
\caption{\textbf{SCENARIO II.} \textbf{(c)} The true
dependency structure shared by $f_1$ and $f_3$: the chain
$X^{(1)}\!\to\!X^{(2)}\!\to\!X^{(3)}$, with no direct edge
$X^{(1)}\!\to\!X^{(3)}$. \textbf{(d)} On the deterministic manifold
$X^{(2)}=\delta X^{(1)}$, the model $f_3$ agrees with $f_1$ everywhere on
$\mathrm{supp}(p)$, so the direct edge $X^{(1)}\!\to\!X^{(3)}$ is \emph{absent on
$\mathrm{supp}(p)$} (crossed). A gradient method nonetheless reports
$\partial f_3/\partial x^{(1)}=-\gamma\delta\neq0$, registering off-manifold
sensitivity and thus violating on-manifold evaluation (C1).}
\label{fig:running-example-gradient}
\end{figure}
\subsubsection{Case Study A: Marginal Influence Based Methods}
For our investigation, we consider the classic marginal influence-based methods KernelSHAP (KS), TimeSHAP (TS), ShapTime (ST), and TS-MULE (TM). We proceed in two ways, analytically and empirically. For the analytical proof, see Appendix. We make the following conjecture,
\begin{proposition}
    These marginal influence-based methods falter in scenario I and/or II, thus failing C1 and/or C2.
\end{proposition}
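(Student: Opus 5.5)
The plan is to reduce all four methods (KS, TS, ST, TM) to a common marginal (interventional) value function and then evaluate it explicitly in Scenarios I and II. Each method estimates, for a coalition $S \subseteq [T]$ of input positions (or, for TimeSHAP and TS-MULE, of events/segments, which in the $T=2$, $D=1$ setting coincide with single timesteps), the value
\[
v_f(S;\mathbf{x}) = \E_{\mathbf{X}' \sim p}\bigl[ f(\mathbf{x}_S, \mathbf{X}'_{\bar S}) \bigr],
\]
which replaces absent coordinates by independent draws from the marginal (background) distribution rather than from $p(\mathbf{x}_{\bar S}\mid \mathbf{x}_S)$. KernelSHAP and ShapTime then take Shapley weights over $v_f$. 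TS-MULE fits a local linear surrogate on perturbations of this same form, and in the linear case its coefficients coincide with the linear-model Shapley solution. I will state this reduction as the first step, as a lemma about how each implementation samples absent features. The global score $\Phi^f_{t_1}$ is then an aggregate $\E_p|\phi_{t_1}(\mathbf{X})|$.

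\textbf{Scenario I (failure of C2).} For a linear model $f(\mathbf{x}) = \sum_t a_t x^{(t)}$, marginal Shapley values are $\phi_{t}(\mathbf{x}) = a_t\bigl(x^{(t)} - \E X^{(t)}\bigr)$. This follows in two lines because $v_f$ is additive under marginal replacement. Hence $\phi_1(\mathbf{x}) = 0$ under $f_1$ and $\phi_1(\mathbf{x}) = \gamma\bigl(x^{(1)} - \E X^{(1)}\bigr)$ under $f_2$.

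This separates $f_1$ from $f_2$ correctly. So Scenario I alone does not break these methods on this particular pair. To exhibit a C2 violation, I will instead use the autocorrelation to construct a model with no direct edge that receives nonzero attribution. The hardest step is choosing this witness honestly, and the first candidate is the model $f_3$ from Scenario II. I will also check whether, under Scenario I's noisy DGP, the marginal score for $f_2$ can be reproduced by some model lacking the edge $X^{(1)}\to X^{(3)}$. If no such model exists, I will concede that Scenario I is handled by marginal SHAP and place the failure in Scenario II, which the ``and/or'' in the statement permits.

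\textbf{Scenario II (failure of C1, hence C2).} With $X^{(2)} = \delta X^{(1)}$, the evaluation points $(x^{(1)}, X'^{(2)})$ with $X'^{(2)}$ drawn independently lie off the line $\supp(p)$ almost surely whenever $X^{(1)}$ is nondegenerate. Applying the linear formula to
\[
f_3(\mathbf{x}) = (\beta+\gamma)\,x^{(2)} - \gamma\delta\,x^{(1)}
\]
gives
\[
\phi_1^{f_3}(\mathbf{x}) = -\gamma\delta\bigl(x^{(1)} - \E X^{(1)}\bigr) \neq 0 = \phi_1^{f_1}(\mathbf{x}).
\]
Thus $\Phi^{f_3}_1 = |\gamma\delta|\,\E|X^{(1)} - \E X^{(1)}| > 0 = \Phi^{f_1}_1$, although $f_1 = f_3$ on $\supp(p)$. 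This violates C1 directly. It also violates C2, because $X^{(1)} \indep f_3(\mathbf{X}) \mid X^{(2)}$ holds trivially, with $X^{(1)}$ being a deterministic function of $X^{(2)}$. Any threshold $\tau < \Phi^{f_3}_1$ therefore yields $\G^\tau_\phi \neq \G_{f_3}$.

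The remaining obstacles are implementation-specific. First, I must verify that TimeSHAP's pruning and event-level coalitions, and TS-MULE's segmentation and kernel weighting, still reduce to marginal replacement with a nonzero $X^{(1)}$ coefficient. For TS-MULE, weighted least squares on off-manifold samples recovers the true linear coefficients $-\gamma\delta$ exactly, since $f_3$ is linear. Second, I will complement the analytical argument empirically with the released code, comparing scores under $f_1$ and $f_3$ across several seeds.
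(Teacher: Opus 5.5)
\textbf{Comparison with the paper's route.} Your route differs from the paper's, and for KernelSHAP and TimeSHAP it is the more accurate one. The paper supports the proposition in two ways. Empirically, it relies on Table~\ref{tab:results1}. Analytically, it uses Part~1 of Proposition~\ref{prop:insufficiency}, which evaluates the observational single-variable functional $\Phi^f_{t_1}=\mathbb{E}\bigl|\mathbb{E}[f(\mathbf{X})\mid X^{(t_1)}]-\mathbb{E}f(\mathbf{X})\bigr|$. That places the failure already in Scenario~I as a C2 violation: $\Phi^{f_1}_1=|\beta\delta|\,\mathrm{MAD}(X^{(1)})>0$, with an $f_1/f_2$ collision at $\gamma=-2\beta\delta$.

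You instead use the interventional value function, under which $X^{(1)}$ is a null player for $f_1$. So Scenario~I is passed, and the failure is the off-manifold (C1) one in Scenario~II. This matches the KS and TS rows of the table ($0$ under $f_1$, nonzero under $f_2$ and $f_3$). It also shows that the appendix's claim that SHAP and TimeSHAP are bounded below by a multiple of $\Phi^f_{t_1}$ cannot hold for the interventional variants. Your Scenario~II computation is sound. So is your remark that C2 is violated there too, since the conditional law of $X^{(1)}$ given $X^{(2)}$ is degenerate.

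\textbf{The gap.} The gap is your first step: the claim that all four methods evaluate $v_f(S;\mathbf{x})=\mathbb{E}[f(\mathbf{x}_S,\mathbf{X}'_{\bar S})]$ with single timesteps as players. For any value function of that form, or any fixed-baseline variant, $v_{f_1}(S\cup\{1\};\mathbf{x})=v_{f_1}(S;\mathbf{x})$ for every $S$. Hence every Shapley-weighted (indeed every least-squares-surrogate) score for $X^{(1)}$ under $f_1$ is exactly $0$. Yet the paper reports $1.27$ for ShapTime under $f_1$. So whatever ShapTime computes, it is not a marginal-replacement game on single timesteps; the paper attributes its nonzero score to the autocorrelation and records a C2 failure already in Scenario~I. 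Your linear formula therefore says nothing about ShapTime, not even about its failure in Scenario~II.

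You flagged only TimeSHAP and TS-MULE as needing implementation-specific checks, but ShapTime is where the reduction visibly breaks. You must either analyze its actual player and value definitions or rely on the experiment for that method.

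\textbf{A smaller correction on TS-MULE.} TS-MULE regresses on the binary masks, so under $f_3$ the surrogate coefficient on $z^{(1)}$ is $-\gamma\delta\,(x^{(1)}-b^{(1)})$ for replacement value $b^{(1)}$, not $-\gamma\delta$. This is still generically nonzero, so your Scenario~II conclusion for TM survives. However, ``coincides with the Shapley solution'' is exact only for unregularized least squares. A ridge surrogate can leak weight onto a correlated null mask, which is consistent with the paper's $0.01$ under $f_1$.
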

\noindent The interpretation of this proposition:
\begin{itemize}
    \item Conditioning on a single variable fails to capture autocorrelation,
    \item Baseline methods like SHAP are likely to lay mass on off-manifold data.
\end{itemize}
To prove our proposition, we present empirical evidence from the experiment.

\begin{table}[h]
\centering
\begin{tabular}{llrcrcrcc}
\toprule
Method  & $\Phi^{f_1}_1$ & $f_1$ ok & $\Phi^{f_2}_1$ & $f_2$ ok & $\Phi^{f_3}_1$ & $f_3$ ok \\
\midrule
KS& 0.00 & \checkmark & 0.42 & \checkmark & 0.20 & $\times$ \\
TS& 0.00 & \checkmark & 0.22 & \checkmark & 0.11 & $\times$ \\
ST& 1.27 & $\times$ & 1.60 & \checkmark & 1.80 & $\times$ \\
TM& 0.01 & $\times$ & 0.40 & \checkmark & 0.20 & $\times$ \\
\bottomrule
\end{tabular}
\caption{\textbf{Attribution to $X^{(1)}$ in Scenario I and II.} Each method's
global score $\Phi^f_1$ for input $X^{(1)}$ under the three models
$f_1,f_2,f_3$. The correct verdict is $\Phi_1=0$ under $f_1$ and $f_3$ (where
$X^{(1)}$ has \emph{no} direct edge to the forecast) and $\Phi_1>0$ under $f_2$
(where it does); \checkmark/$\times$ marks whether the method agrees. KS
(KernelSHAP) and TS (TimeSHAP) handle $f_1$ and $f_2$ correctly but assign a
spurious nonzero score under $f_3$, where $f_3\!\equiv\!f_1$ on
$\mathrm{supp}(p)$, a violation of on-manifold evaluation (C1). ST (ShapTime)
and TM (TS-MULE) fail already on $f_1$, attributing influence to the
fully-mediated $X^{(1)}$, a violation of conditional separation (C2). No method
is correct across all three models.}
\label{tab:results1}
\end{table}

The results in Table~\ref{tab:results1} prove our proposition. We observe that even when the SHAP-based methods TS and KS correctly identify spurious relationships caused by autocorrelation, they fail under on-manifold constraints caused by faulty baselines used in their marginalization strategies. And ShapTime and TS-MULE fail in all cases where autocorrelation is present.
\subsection{Case Study: Gradient-Based Methods}
We consider in this case Saliency (SA), Integrated Gradients (IG), its extension Temporal Integrated Gradients (TIG), and SmoothGrad (SG). 

\begin{proposition}
\label{prop:scenarioii}
    Gradient-based methods fail in scenario II, thus failing C1.
\end{proposition}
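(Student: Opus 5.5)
We prove Proposition~\ref{prop:scenarioii} by exhibiting, for each of SA, IG, TIG and SG, two models that coincide on $\supp(p)$ yet receive different attributions. By (C1) this already disqualifies the method. The natural pair is $f_1$ and $f_3$ from Scenario~II. On the line $x^{(2)}=\delta x^{(1)}$ we have $f_3=f_1$ pointwise, so (C1) forces $\Phi^{f_3}_1=\Phi^{f_1}_1$. Moreover, since $\G_{f_3}=\G_{f_1}$ contains no edge $X^{(1)}\to X^{(3)}$, (C2) together with the thresholding Claim forces this common value to be $0$ (for any $\tau\ge 0$ with $\Phi^{f_1}_1\le\tau$). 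So it suffices to show $\Phi^{f_1}_1=0$ and $\Phi^{f_3}_1\neq 0$ for each method.

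The first step is to compute the gradients. We have $\nabla f_1=(0,\beta)$ everywhere, while $\nabla f_3=(-\gamma\delta,\ \beta+\gamma)$ everywhere. Both are constant, because the models are linear. Every gradient-based functional we consider is a linear average of $\nabla f$, possibly multiplied elementwise by an input difference:
\begin{itemize}
\item for Saliency, $\phi^{(1)}(\mathbf{x})=|\partial_1 f(\mathbf{x})|$ (or the signed derivative);
\item for SmoothGrad, this quantity is averaged over $\mathbf{x}+\eta$ with Gaussian $\eta$;
\item for IG with baseline $\mathbf{x}'$, $\phi^{(1)}(\mathbf{x})=(x^{(1)}-x'^{(1)})\int_0^1\partial_1 f(\mathbf{x}'+\alpha(\mathbf{x}-\mathbf{x}'))\,d\alpha$;
\item TIG is the same path integral taken along a temporally structured path.
\end{itemize}
Because the gradient is constant, each of these collapses in closed form. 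For $f_1$ the first coordinate vanishes identically, so $\phi^{(1)}\equiv 0$ and hence $\Phi^{f_1}_1=0$. For $f_3$ we obtain $|\gamma\delta|$ for SA and SG, and $-\gamma\delta\,(x^{(1)}-x'^{(1)})$ for IG and TIG.

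The second step is to aggregate into the global score $\Phi^f_1$, which by \eqref{eq:marginal} is an expectation over $p$ (typically of $|\phi^{(1)}|$). For SA and SG this gives $\Phi^{f_3}_1=|\gamma|\,|\delta|>0$, since $\gamma,\delta\neq 0$. For IG and TIG it gives $|\gamma\delta|\,\E_p|X^{(1)}-x'^{(1)}|$. This is strictly positive as soon as $X^{(1)}$ is not almost surely equal to the baseline value, which holds for any nondegenerate reference distribution; I would assume this explicitly, e.g.\ $X^{(1)}$ has a continuous law. Hence $\Phi^{f_3}_1\neq\Phi^{f_1}_1$, which violates (C1).

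The main obstacle is making the argument robust to how each method is defined, rather than doing the calculations. There are two issues. First, TIG's path and the SG noise may leave $\supp(p)$, but this only helps, since off-support evaluation is exactly the failure we are exhibiting. Second, if the global aggregation used signed values, the IG contributions $-\gamma\delta(x^{(1)}-x'^{(1)})$ could average to zero. I would handle this by taking absolute values in \eqref{eq:marginal}. If signed aggregation is forced, I would instead show that (C1) already fails locally: at any single $\mathbf{x}\in\supp(p)$ with $x^{(1)}\neq x'^{(1)}$, the local attribution differs between $f_1$ and $f_3$.
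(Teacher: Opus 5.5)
Your proposal is correct and is essentially the paper's argument: $f_3=f_1$ on $\supp(p)$ while $\partial f_3/\partial x^{(1)}=-\gamma\delta\neq 0$ everywhere, so every gradient-based score separates two models that (C1) requires to receive identical attributions. Your per-method closed forms, including the IG baseline condition and the signed-aggregation caveat, make explicit what the appendix states in one line. One citation is off: the global score you actually aggregate is $\E_p|\phi^{(1)}(\mathbf{x})|$, as in Remark~\ref{rem:gradient-edge}, not \eqref{eq:marginal} taken literally. That formula depends on $f$ only through its values on $\supp(p)$, so it would give $\Phi^{f_3}_1=\Phi^{f_1}_1$ and not separate the two models.
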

\noindent The interpretation:
\begin{itemize}
    \item Gradient computation causes a perturbation that makes explanations learn on off-manifold data.
\end{itemize}

\begin{table}[h]
\centering
\begin{tabular}{llrcrcrcc}
\toprule
Method  & $\Phi^{f_1}_1$ & $f_1$ ok & $\Phi^{f_2}_1$ & $f_2$ ok & $\Phi^{f_3}_1$ & $f_3$ ok \\
\midrule
SA  & 0.00 & \checkmark & 0.50 & \checkmark & 0.25 & $\times$ \\
IG  & 0.00 & \checkmark & 0.40 & \checkmark & 0.20 & $\times$ \\
TIG & 0.00 & \checkmark & 0.40 & \checkmark & 0.20 & $\times$ \\
SG  & 0.00 & \checkmark & 0.50 & \checkmark & 0.25 & $\times$ \\
\bottomrule
\end{tabular}
\caption{\textbf{Gradient-based attribution.}
Global score $\Phi^f_1$ for input $X^{(1)}$ under the three models; the correct
verdict is $\Phi_1=0$ under $f_1,f_3$ (no direct edge $X^{(1)}\!\to\!X^{(3)}$)
and $\Phi_1>0$ under $f_2$ (direct edge present). SA (Saliency), IG (Integrated
Gradients), TIG (Temporal IG), and SG (SmoothGrad) all separate $f_1$ from
$f_2$ correctly, but every method assigns a spurious nonzero score under $f_3$:
since $f_3\!\equiv\!f_1$ on $\mathrm{supp}(p)$ yet
$\partial f_3/\partial x^{(1)}=-\gamma\delta\neq0$, the gradient registers
off-manifold sensitivity, violating on-manifold evaluation (C1). The uniform
failure on $f_3$ is the gradient-family signature, complementing the marginal
methods' (C2) failures in Table~\ref{tab:results1}.}
\label{tab:result2}
\end{table}

Table~\ref{tab:result2} proves Proposition~\ref{prop:scenarioii}. Indeed, in scenario II, we expect variable $X_1$ to be attributed $0$ relevance towards prediction in $f_3$ on $supp(p)$, but for all gradient-based methods, we consistently get non-zero attributions for $X_1$ under $f_3$, indicating that they break on-manifold restrictions.

\section{Admissible Instantiations of Temporal-aware Explanations}
\label{sec:framework}
\subsection{Existing Methods}
Among existing methods, the temporally-aware perturbation approaches, namely FIT,
WinIT and Dynamask are the closest to admissibility, i.e., their underlying structures fulfill C1 and C2: 

\paragraph{FIT} scores each observation by its contribution to the shift in the model's predictive distribution over time, measured as a KL divergence between the prediction given all observed features and the prediction under a counterfactual in which the target feature is treated as unobserved, that feature being replaced by a draw from a learned conditional generator $p(x_{\text{unobs}}\mid x_{\text{obs}})$
 rather than a fixed value.
 
\paragraph{WinIT} extends FIT in two ways: it measures a feature's importance on later predictions by aggregating over a temporal window (capturing delayed effects), and it accounts for dependencies between features across time. Like FIT, it removes a feature by sampling an in-distribution replacement from a conditional generator and scores the resulting predictive-distribution change over the window.

\paragraph{Dynamask} learns a near-binary mask $m\in[0,1]^{T\times D}$ by optimizing a perturbation objective with a sparsity/area regularizer; masked entries are replaced not by a constant baseline but by a temporally coherent perturbation operator (e.g., a moving-average / blurred version of the series), so the perturbed input stays close to the local temporal structure.

 Each already incorporates,
to a degree, the two ingredients our conditions require, namely conditional,
in-distribution replacement of removed features (via a learned conditional
generator for FIT and WinIT (WI), or a temporally coherent perturbation operator for
Dynamask (DM)). Empirically they remain only partially reliable, their edge-recovery
accuracy still falls short of what DAG-faithfulness demands, but their structure
is the right one, and improving the fidelity of the conditional sampler and the
choice of conditioning set is a promising route toward an admissible standard for
explainability in time-series forecasting.
\begin{table}[h]
    \centering
\begin{tabular}{llrcrcrcc}
\toprule
Method  & $\Phi^{f_1}_1$ & $f_1$ ok & $\Phi^{f_2}_1$ & $f_2$ ok & $\Phi^{f_3}_1$ & $f_3$ ok \\
\midrule

WI  & 0.00 & \checkmark & 0.00 & $\times$ & 0.00 & \checkmark \\
DM  & 0.00 & \checkmark & 0.01 & \checkmark & 0.00 & \checkmark \\
FIT  & 0.00 & \checkmark & 0.00 & $\times$ & 0.00 & \checkmark \\
\bottomrule
\end{tabular}
\caption{\textbf{Temporally-aware perturbation methods.}
Global score $\Phi^f_1$ for $X^{(1)}$ under the three models; correct is
$\Phi_1=0$ for $f_1,f_3$ and $\Phi_1>0$ for $f_2$. Unlike the marginal and
gradient families (Tables~\ref{tab:results1}--\ref{tab:result2}), WI (WinIT),
DM (Dynamask) and FIT do not hallucinate the mediated edge under $f_1$ or the
off-manifold edge under $f_3$: by replacing removed features with conditional,
in-distribution samples they partially respect both admissibility conditions.
Their weakness is the opposite one, low sensitivity to the \emph{genuine}
direct edge: WI and FIT score $0$ under $f_2$ (a missed true edge, a false
negative), and DM detects it only weakly ($0.01$). These methods thus err toward
under-attribution rather than the spurious over-attribution of the other
families, which is why we regard them as the most promising starting point for an admissible instantiation.}
    \label{tab:result3}
\end{table}

\subsection{The Way Forward}

The failure identified in Section~\ref{sec:gap} is, at its root,
\emph{representational}: a single scalar per feature cannot record \emph{which}
other variables a dependency is conditioned on. The minimal object that can is a
tensor indexed by source variable, target variable, and lag.

\begin{definition}[Transition Tensor]
\label{def:transition-tensor}
A \emph{transition tensor} for a model $f$ at input $\mathbf{x}$ is a three-dimensional array
\[
\tlocal(\mathbf{x}) \in \R^{D \times D \times T},
\]
whose entry $\tlocal_{ij}{}^{(t_1)}(\mathbf{x})$ quantifies the influence of $X_i^{(t_1)}$ on $f(\mathbf{x})_j$ at the forecast target, for input position $t_1 \in [T]$. The forecast time is fixed at $T+1$, so a single time axis (the source position $t_1$) suffices. When $W >1$, we have $W$ such containers, that is, 
\[
\tlocal(\mathbf{x}) \in \R^{D \times D \times T \times W},
\]
and entry $\tlocal_{ij}{}^{(t_1), w}(\mathbf{x})$ quantifies the influence of $X_i^{(t_1)}$ on forecast at $f(\mathbf{x})_j$ at forecast time $T+w$, $w \in [W]$. A \emph{Global Transition Tensor} summarizes the model's behavior across the reference
distribution, so we aggregate the per-instance tensors.
\[
    \tglob = \E_{\mathbf{x} \sim p(\mathbf{x})} \left[ \tlocal(\mathbf{x}) \right]
\]
where $p(\mathbf{x})$ is the reference distribution. In practice, this is estimated as
\[
    \hat{\tglob} = \frac{1}{N} \sum_{n=1}^{N} \tlocal(\mathbf{x}^{(n)})
\]
over an evaluation set of $N$ samples drawn from $p$.
The DAG structure of the global explanation is read from the \emph{support} of $\tglob$, not its magnitude: an edge $X_i^{(t_1)} \to X_j^{(T+1)}$ is present in the induced graph $\G_{\T}$ if and only if $|\tglob_{ij}{}^{(t_1)}| > \epsilon$ for some threshold $\epsilon > 0$. This cleanly separates \emph{structural} information (which dependencies exist, global and discrete) from \emph{strength} information (how strong each dependency is, which may vary per instance). Figure~\ref{fig:transition-tensor} summarizes the pipeline from local tensors to the induced graph.

\end{definition}

The tensor is only a \emph{structural container}: its shape provides the minimal
capacity needed to encode pairwise temporal dependencies, but the choice of
influence functional $\tlocal_{ij}{}^{(t_1)}(\cdot)$ determines whether a given
instantiation is DAG-faithful (Definition~\ref{def:faithfulness}).

\begin{definition}[Admissible Instantiation]
\label{def:admissible}
An influence functional $\phi$ is \emph{admissible} if its resulting transition
tensor satisfies two conditions:
\begin{enumerate}
    \item \textbf{On-manifold evaluation:} $\phi$ depends on $f$ only through its
    behavior on $\supp(p)$, so that models agreeing on $\supp(p)$ receive
    identical tensors.
    \item \textbf{Conditional separation:} the entry $\tglob_{ij}{}^{(t_1)}$
    vanishes whenever $X_i^{(t_1)} \indep f(\X)_j \mid \X \setminus \{X_i^{(t_1)}\}$
    under $p(\mathbf{x})$.
\end{enumerate}
\end{definition}

\begin{figure}[ht]
  \centering
  \includegraphics[width=\columnwidth]{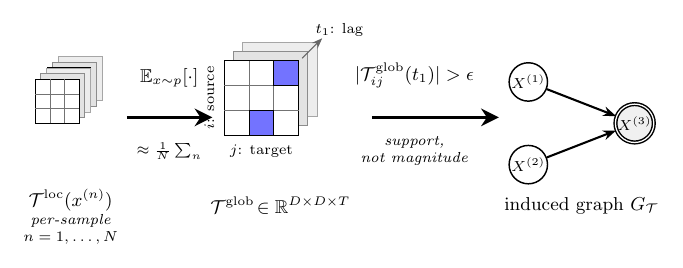}
  \caption{\textbf{From local influence to a global dependency graph.}
  Instance-level transition tensors $\tlocal(\mathbf{x}^{(n)})$
  (Definition~\ref{def:transition-tensor}), indexed by source variable $i$,
  target variable $j$, and source lag $t_1$, are averaged over the reference
  distribution into the global tensor $\tglob$. The induced graph $\G_{\T}$
  (Definition~\ref{def:faithfulness}) is read from the \emph{support} of
  $\tglob$: an edge is present iff $|\tglob_{ij}{}^{(t_1)}|>\epsilon$, separating
  which dependencies exist from how strong each one is.}
  \label{fig:transition-tensor}
\end{figure}

This motivates the position we advocate: \emph{a global explanation method for
time-series forecasting is acceptable only if it is admissible in the sense of
Definition~\ref{def:admissible} and expresses its output as a transition tensor.}
The two conditions are not meant to be verified as external metrics after the
fact, that is intractable in general on real world data, but to be built into the influence
functional \emph{by construction}.

\paragraph{Recommendation.}
A concrete admissible functional must combine two ingredients, one per condition:
\begin{itemize}
    \item \textbf{(C1) On-manifold evaluation}, approached through manifold-aware
    Jacobian entries (directional derivatives), integrated gradients along the
    temporal axis, or attention-flow measures~\cite{abnar2020quantifying};
    \item \textbf{(C2) Conditional separation}, approached through
    conditional-distribution-shift measures such as those underlying FIT and
    WinIT, but conditioning on the \emph{full} set of remaining inputs rather
    than a single variable.
\end{itemize}

The main obstacle lies in (C2): the conditioning set is of order
$\mathcal{O}(DT)$. This can be reduced by replacing the full complement with an
estimated Markov blanket of the source variable, by discretizing the conditioning
variables into bins, or by running the tests in the state space of a lower-order
surrogate that approximates $f$.

\subsection{Evaluation Protocol}
\label{sec:evaluation}

So far DAG-faithfulness has been an abstract criterion. We now make it
\emph{observable}, a quantity that can be measured for an existing method or used
to test a new one. The key is that the global criterion decomposes into
independent per-edge tests.

\begin{proposition}[Edge-local characterization of DAG-faithfulness]
\label{prop:evaluation}
DAG-faithfulness (Definition~\ref{def:faithfulness}) admits an edge-local
characterization that does not require recovering $\G_f$ in advance. For each
candidate edge, the edge condition of Definition~\ref{def:model-dag} reduces to a
conditional-independence test between the source input and the model's forecast,
which can be estimated from model queries and held-out data via kernel-based or
neural conditional-independence estimators.
\end{proposition}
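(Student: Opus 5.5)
The plan is to exploit the bipartite structure of $\G_f$ fixed by Definition~\ref{def:model-dag}. Every edge runs from an input node $X_i^{(t_1)}$, $t_1\in[T]$, to a forecast sink $X_j^{(T+w)}$. No input node has parents inside the graph, and no sink has children. For such a two-layer DAG, the d-separation statement $X_i^{(t_1)} \indep_{\G} X_j^{(T+w)} \mid \mathrm{rest}$ holds exactly when the edge $X_i^{(t_1)}\to X_j^{(T+w)}$ is absent. The argument for this is a path argument: conditioning on the rest blocks every other path. Any alternative path between the two endpoints must pass through another input node, which is a non-collider and is in the conditioning set. It could instead pass through another sink as a collider $X_i^{(t_1)}\to X_{j'}^{(T+w')}\leftarrow X_k^{(t')}$, but such a path must then continue through the input $X_k^{(t')}$, which is conditioned on and blocks it. The only unblocked connection is therefore the direct edge. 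I will state this as a lemma first. It applies to both $\G_f$ and $\G_{\T}$, since Definition~\ref{def:transition-tensor} also builds $\G_{\T}$ only from input-to-forecast edges read off the support of $\tglob$.

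From the lemma, the biconditional in Definition~\ref{def:faithfulness}, quantified over all pairs $(X_i^{(t_1)},X_j^{(T+w)})$, is equivalent to edge-set equality $E(\G_{\T})=E(\G_f)$. In this setting, Markov equivalence coincides with identity of skeletons: edge orientations are forced, and there are no v-structures beyond those the shared skeleton determines. DAG-faithfulness therefore splits into $DTW\cdot D$ independent per-edge conditions, namely $|\tglob_{ij}{}^{(t_1),w}|>\epsilon \iff (X_i^{(t_1)}\to X_j^{(T+w)})\in\G_f$. Each right-hand side is, by Definition~\ref{def:model-dag}, the conditional-dependence statement $X_i^{(t_1)} \not\indep f(\X)_{j,w} \mid \X\setminus\{X_i^{(t_1)}\}, f(\X)_{-j,w}, f(\X)_{\cdot,\neq w}$ under $p$. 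None of these conditions refers to any other edge, so $\G_f$ never has to be recovered globally. Each edge can be checked in isolation against the corresponding tensor entry.

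Finally, each such condition has to be estimable. The relevant random variables are all functions of $\X\sim p$ and of $f$. Drawing held-out samples $\mathbf{x}^{(n)}\sim p$ and querying $f$ yields i.i.d. samples of the tuple $(X_i^{(t_1)}, f(\X)_{j,w}, \text{conditioning set})$. A consistent conditional-independence test, such as KCIT or a classifier/neural CI estimator, then decides the edge with error tending to zero as $N\to\infty$. I will also note a simplification. Because the forecasts are deterministic functions of $\X$, conditioning on $\X\setminus\{X_i^{(t_1)}\}$ reduces the extra forecast terms to functions of $X_i^{(t_1)}$ and the rest. The test can therefore be run on $(X_i^{(t_1)}, f(\X)_{j,w})$ given $\X\setminus\{X_i^{(t_1)}\}$, augmented by the other forecasts only as needed.

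The main obstacle is this last step. Because $f$ is deterministic in $\X$, conditioning on all of $\X\setminus\{X_i^{(t_1)}\}$ makes the conditional law degenerate, and consistency of standard CI tests then requires care. Scenario II shows an extreme case: when $X_i^{(t_1)}$ is itself a deterministic function of the rest, the conditional independence holds trivially. I would handle this by restricting attention to $\supp(p)$, consistent with (C1), and by invoking consistency of kernel CI tests under standard regularity assumptions. I would state these assumptions explicitly (positivity of conditional densities or an explicit degeneracy convention) rather than prove them. The $\mathcal{O}(DT)$ conditioning set would be reduced by the Markov blanket remark from the Recommendation paragraph.
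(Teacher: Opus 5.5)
Your main line is correct and is essentially the paper's own argument. The paper gives no separate proof, only the remark after the proposition that each edge of $\G_f$ is defined by its own conditional-independence statement, so $\G_{\T}$ can be scored against $\G_f$ edge by edge. Your bipartite d-separation lemma supplies the step that remark takes for granted. Since $\G_f$ and $\G_{\T}$ contain only input-to-forecast edges, every non-direct path between $X_i^{(t_1)}$ and $X_j^{(T+w)}$ passes through an intermediate input node, which is a fork lying in the conditioning set. Hence the biconditional of Definition~\ref{def:faithfulness} (and Markov equivalence) collapses to edge-set equality, and each edge is then exactly the CI condition of Definition~\ref{def:model-dag}.

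One remark in your estimation step is wrong, however. The other forecasts are functions of $X_i^{(t_1)}$ and the rest, but that does not make them redundant once you condition on $\X\setminus\{X_i^{(t_1)}\}$. Precisely because they depend on $X_i^{(t_1)}$, they carry additional information about it. Take $D=2$, $W=1$ and $f(\X)_{1,1}=f(\X)_{2,1}=X_1^{(T)}$. Conditioning on $f(\X)_{2,1}$ pins down $X_1^{(T)}$, so the independence in Definition~\ref{def:model-dag} holds and the edge $X_1^{(T)}\to X_1^{(T+1)}$ is absent (as is the edge to $X_2^{(T+1)}$, by symmetry). Your reduced test, conditioning only on $\X\setminus\{X_1^{(T)}\}$, instead reports dependence whenever $X_1^{(T)}$ is nondegenerate given the rest. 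So the per-edge test must keep the full conditioning set of Definition~\ref{def:model-dag}, unless you add an explicit assumption that makes the other forecasts redundant.

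The remark is not load-bearing, so deleting it leaves your proof sound. Stating the regularity or degeneracy conventions for the CI estimator as explicit assumptions, as you propose, is the right treatment; the paper does not address that issue at all.
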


The consequence is practical: since each edge is an independent
conditional-independence test, an explanation's induced graph $\G_{\T}$ can be
scored against $\G_f$ edge by edge, without ever materializing $\G_f$ in full. We
outline the resulting protocol in three settings of decreasing access to ground
truth.

\paragraph{Synthetic benchmarks.}
Generate data from a known dynamic Bayesian network with a fixed ground-truth DAG
$\G^*$, and train a temporal model $f$ on it. Under our model-centric framing the
target of recovery is $\G_f$ (what the model learned), not $\G^*$ (what generated
the data). When $f$ is analytically characterizable e.g.\ a linear VAR with
known coefficients $\G_f$ is known by construction, and the
\emph{structural Hamming distance} between $\G_{\T}$ and $\G_f$ is a direct scalar
measure of faithfulness. When $\G_f$ is not analytically accessible, $\G^*$ serves
as a proxy, with the caveat that recovery error then conflates model-learning
error with explanation error.

\paragraph{Flat-method baselines.}
On the same benchmark, any flat attribution method is evaluated by thresholding
its scores into a binary edge mask and comparing that mask to the skeleton of
$\G_f$. This places flat and tensor-based methods on a common footing and makes
the representational gap of Section~\ref{sec:gap} concrete and measurable.

\paragraph{Real-data evaluation.}
When $\G^*$ is unknown, faithfulness is assessed indirectly through
\emph{interventional consistency}: if the explanation marks an edge
$X_i^{(t_1)} \to X_j^{(T+1)}$ as absent, then intervening on $X_i^{(t_1)}$ should
leave $f$'s forecast for $X_j$ unchanged once the rest of the window is held
fixed. This links the criterion to model debugging and counterfactual
evaluation~\cite{goyal2019counterfactual}.

\section{Analysis of Standard Methods}
\label{sec:analysis}

\subsection{Standard Methods Fail DAG-Faithfulness}

\begin{theorem}[SHAP is not DAG-faithful in general]
\label{thm:shap}
There exists a model $f$ and reference distribution $p(\mathbf{x})$ such that the global SHAP explanation is not DAG-faithful.
\end{theorem}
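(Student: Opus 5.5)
It suffices to exhibit one counterexample, and Scenario II already provides the structure we need. We take $D=1$, $T=2$, $W=1$, and the deterministic process $X^{(2)}=\delta X^{(1)}$ with $\delta\neq 0$. We let $X^{(1)}\sim\mathcal{N}(0,1)$, so that $p$ is the push-forward onto the line $\{x^{(2)}=\delta x^{(1)}\}$. The model is $f_3(\mathbf{x})=\beta x^{(2)}+\gamma(x^{(2)}-\delta x^{(1)})$ with $\beta,\gamma\neq 0$. The global SHAP explanation is the standard interventional (marginal) SHAP, i.e.\ the one computed by KernelSHAP. For each input $\mathbf{x}$ it uses the value function $v_{\mathbf{x}}(S)=\E_{\mathbf{X}'\sim p}[f(\mathbf{x}_S,\mathbf{X}'_{\bar S})]$, and we aggregate as $\Phi^f_{t_1}=\E_{\mathbf{x}\sim p}|\phi_{t_1}(\mathbf{x})|$. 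The plan has two steps: show that $X^{(1)}\to X^{(3)}\notin\G_{f_3}$, and show that $\Phi^{f_3}_1>0$.

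For the first step we apply Definition~\ref{def:model-dag}. On $\supp(p)$ we have $f_3(\mathbf{X})=\beta X^{(2)}$ almost surely. Hence $f_3(\mathbf{X})$ is a deterministic function of $X^{(2)}$, which gives $X^{(1)}\indep f_3(\mathbf{X})\mid X^{(2)}$ and shows that the edge is absent, exactly as argued for $f_1$ in Scenario II. The edge $X^{(2)}\to X^{(3)}$ needs more care. Under a deterministic relation, conditioning on $X^{(1)}$ also determines $X^{(2)}$, so that edge may be absent too. This does not matter: it is enough that the two graphs differ on the single pair $(X^{(1)},X^{(3)})$.

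For the second step we compute SHAP explicitly. The model is linear and the inputs have zero mean under $p$, so interventional SHAP reduces to coefficient times centered input. Writing $f_3=(\beta+\gamma)x^{(2)}-\gamma\delta x^{(1)}$, we get
\[
\phi_1(\mathbf{x})=-\gamma\delta\,x^{(1)},\qquad \phi_2(\mathbf{x})=(\beta+\gamma)\,x^{(2)}.
\]
Therefore $\Phi^{f_3}_1=|\gamma\delta|\,\E|X^{(1)}|=|\gamma\delta|\sqrt{2/\pi}>0$. We then choose a threshold $\tau$ with $0\le\tau<|\gamma\delta|\sqrt{2/\pi}$. For the global score to be meaningful at all, the threshold must also separate $f_2$ from $f_1$, and any $\tau$ below both scores does this. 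With such a $\tau$ the induced graph $\G^\tau_\phi$ contains $X^{(1)}\to X^{(3)}$. Consequently $X^{(1)}\not\indep_{\G_{\T}}X^{(3)}\mid\mathrm{rest}$, while $X^{(1)}\indep_{\G_f}X^{(3)}\mid\mathrm{rest}$, so the equivalence in Definition~\ref{def:faithfulness} fails. The underlying reason is that the hybrid points $(x^{(1)},X'^{(2)})$ in the marginal value function lie off $\supp(p)$, which is the violation of (C1).

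The main obstacle is the quantifier over $\tau$. A skeptic could argue that some larger threshold makes SHAP faithful here. To close this gap, I would try a one-parameter family of $f_3$ in which $\gamma$ is varied. This drives $\Phi^{f_3}_1$ above any fixed $\tau$, while $\G_{f_3}$ stays unchanged. Alternatively, I would pair $f_3$ with an $f_2$-type model that has a small $\gamma'$, so that its true edge scores below $\Phi^{f_3}_1$. Then no single $\tau$ classifies both models correctly. A secondary concern is that the deterministic support makes conditional independence degenerate. If needed, I would replace it with a small-noise version $X^{(2)}=\delta X^{(1)}+\sigma\varepsilon$ and check that the edge stays absent when the model is restricted to be $\beta x^{(2)}$ on the support. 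Since the support is then all of $\R^2$, this instead calls for using $f_1$ together with conditional versus marginal SHAP, so I would keep the deterministic version as the primary argument.
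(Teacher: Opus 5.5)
Your counterexample is correct, provided the threshold $\tau$ is fixed before the model is chosen. The paper's sketch relies on the same reading. Your route differs from the paper's.

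The paper uses the noisy Scenario~I setting: the model $\alpha x_1^{(2)}$ on the chain $X_1^{(1)}\to X_1^{(2)}\to Y^{(3)}$. There $\G_f$ has a genuine edge $X_1^{(2)}\to Y^{(3)}$ and lacks $X_1^{(1)}\to Y^{(3)}$ because of mediation. The paper argues that SHAP credits $X_1^{(1)}$ through autocorrelation, which is a (C2)-type failure. You instead use the deterministic Scenario~II model $f_3$ with an explicit interventional computation $\phi_1(\mathbf{x})=-\gamma\delta x^{(1)}$. Your spurious edge therefore comes from off-manifold hybrid points, a (C1)-type failure.

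Your route has a real advantage: it covers the variant KernelSHAP actually computes. Under interventional SHAP the paper's model does not depend on $x_1^{(1)}$, so the dummy axiom forces $\phi_1\equiv 0$. This matches the $0.00$ KernelSHAP entry for $f_1$ in Table~\ref{tab:results1}. The paper's sketch is therefore valid only for conditional SHAP, where $v(\{1\})=\alpha\delta x_1^{(1)}$ gives $\phi_1=\tfrac12\alpha\delta x_1^{(1)}$. In exchange, the paper's route has a non-degenerate $p$ and isolates the mediation mechanism.

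Your route pays for its degeneracy. When $X^{(2)}=\delta X^{(1)}$, conditioning on either input fixes the other, so $\G_f$ is empty for \emph{every} model under this $p$; you noticed this only for $X^{(2)}\to X^{(3)}$. Three consequences follow:
\begin{itemize}
\item $\phi_2$ alone already yields a spurious edge, for $f_1$ as well as for $f_3$, so the off-manifold mechanism is not what the counterexample needs.
\item Your remark that $\tau$ must ``separate $f_2$ from $f_1$'' has no content here, because both graphs are empty.
\item The pairing fallback with a small-$\gamma'$ model cannot be carried out within this $p$, since that model's graph is empty too.
\end{itemize}
Your large-$\gamma$ family for a fixed $\tau$ is the right way to close the quantifier. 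If $\tau$ could instead be tuned per model, a large $\tau$ would reproduce your empty $\G_{f_3}$. The paper's example would then fail as well: for conditional SHAP with Gaussian noise, $\E|\phi_2|>\E|\phi_1|$ by conditional Jensen, and any threshold between the two recovers $\G_f$ exactly.
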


\begin{proof}[Proof sketch]
Consider a linear model with $D = 2$, $T = 2$ implementing the chain $X_1^{(1)} \to X_1^{(2)} \to Y^{(3)}$, where $Y^{(3)} = \alpha X_1^{(2)}$ and the data-generating process satisfies $X_1^{(2)} = \delta X_1^{(1)} + \varepsilon$ with $\varepsilon$ independent of $X_1^{(1)}$. The model dependency graph $\G_f$ contains the direct edge $X_1^{(2)} \to Y^{(3)}$ but not $X_1^{(1)} \to Y^{(3)}$: conditional on $X_1^{(2)}$, $X_1^{(1)}$ is independent of $Y^{(3)}$ under $p$.

SHAP values (either marginal or conditional variant) assign a nonzero Shapley value to $X_1^{(1)}$ whenever its coalition contributions shift the conditional expectation of $f$, which they do via the autocorrelation structure above. When the induced graph $\G_{\T^{\mathrm{SHAP}}}$ is defined by thresholding Shapley values, it contains the spurious edge $X_1^{(1)} \to Y^{(3)}$ that is absent from $\G_f$. Hence $\G_{\T^{\mathrm{SHAP}}}$ and $\G_f$ are not Markov-equivalent: they disagree on the d-separation $X_1^{(1)} \indep Y^{(3)} \mid X_1^{(2)}$, which holds in $\G_f$ but fails in $\G_{\T^{\mathrm{SHAP}}}$. The same argument applies to TimeSHAP~\cite{bento2021timeshap}, which inherits the coalition structure of SHAP on sequence models.
\end{proof}

The same argument applies to gradient saliency (which computes marginal partial derivatives, collapsing mediated paths) and attention weights (which are not formally tied to conditional independence~\cite{jain2019attention,wiegreffe2019attention}).

\section{Discussion and Open Challenges}
\label{sec:discussion}

\paragraph{Instantiating the transition tensor.}
Definition~\ref{def:transition-tensor} specifies the structural container; Definition~\ref{def:admissible} specifies the two conditions any DAG-faithful instantiation must satisfy. Demonstrating a concrete admissible instantiation that achieves DAG-faithfulness in general remains the central open problem. The failure modes (C1) and (C2) identified rule out naive Jacobian-based instantiations (which violate on-manifold evaluation) and marginal perturbation-based instantiations (which violate conditional separation). Candidate directions include local conditional-independence estimators on the model's predictive distribution and low-rank factorizations that exploit temporal structure to reduce the $O(D^2 T)$ sample complexity of the full tensor.

\paragraph{Joint Window Forecasting}
One of the main pain points in XAI for time series forecasting, and an issue we stated but did not address in this text, was the potential joint predictive behavior of $W>1$ forecasting models. Indeed, when discussing the model dependency graph, we focused on singular past time steps to forecast step edges and omitted intra-forecast-window dependency edges, a point that is also very valuable in the context of temporal-structure dependency-based XAI methods. Possible future directions include the addition of causal discovery methods for edge recovery in the forecast window, or methods leveraging the idea of Markov Chains, to understand joint predictive behavior.

\paragraph{Robustness to distributional shift.}
DAG-faithfulness is defined relative to a reference distribution $p$. Whether the induced graph $\G_{\T}$ is invariant under reasonable variations in $p$ is an open question and a natural target for theoretical analysis.

\paragraph{Identifiability.}
We require Markov equivalence rather than exact DAG recovery, since the latter is not identifiable from observational data in general. Strengthening exact recovery requires additional assumptions (e.g.\ linear-Gaussian models, or access to interventional data), which are worth exploring as restricted settings with sharper guarantees.

\paragraph{Human interpretability.}
A $D \times D \times T$ tensor is not directly human-readable. Visualization strategies (lag-aggregated heatmaps, interactive graph browsers, dimensionality-reduced summaries) are needed to make the structure accessible. We view this as a complementary challenge to the theoretical framework: the tensor is the right internal representation for faithfulness, but user-facing explanations will require principled projections of it.

\section{Conclusion}

We have argued that faithful global explanations for time series models cannot be achieved using conventional flat attribution scoring methods. We have defined \emph{DAG-faithfulness} as the appropriate criterion for global time series explanations, formalized the admissibility conditions that candidate instantiations must satisfy, shown that standard methods and their time-series-aware extensions fail these conditions by construction, and connected the framework to Granger causality and dynamic Bayesian networks. We call on the XAI community to adopt a structure-centric faithfulness criterion and the transition tensor representation as the foundation for explainability in temporal domains, and we identify the construction of admissible, DAG-faithful instantiations as the central open problem for future work.


\bibliographystyle{unsrtnat}
\bibliography{references}
\newpage
\mbox{}
\appendix
\section{Appendix}

\begin{proposition}[Computational Insufficiency of Flat Attribution Methods]
\label{prop:insufficiency}
Flat attribution methods cannot satisfy both admissibility conditions (C1) and (C2) in general. Specifically:
\begin{enumerate}
    \item Any flat attribution method whose global score reduces to marginal conditioning over a single input (i.e., conditioning only on $X^{(t_1)}$ and not on the remaining inputs) violates (C2): there exist models $f_1, f_2$ with non-isomorphic dependency graphs $\G_{f_1} \not\cong \G_{f_2}$ that the method assigns identical global attributions so that no threshold can separate them.
    \item Any gradient-based method that evaluates $\partial f/\partial x$ as a function on $\R^{T \times D}$ violates (C1): there exist models $f_1, f_3$ with $\G_{f_1} = \G_{f_3}$ that the method assigns different attributions, since the gradient registers off-manifold sensitivity that is not part of the model's on-manifold dependency structure.
\end{enumerate}
A DAG-faithful explanation, therefore, requires an attribution functional satisfying both (C1) and (C2), properties that no existing flat attribution scoring method can assure. The transition tensor (Definition~\ref{def:transition-tensor}) provides the minimal representational container in which both conditions can be expressed.
\end{proposition}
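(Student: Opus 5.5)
Both parts are existence claims, so I will prove each by explicit construction, reusing Scenario~I for part~1 and Scenario~II for part~2. I will take the single-input reading of ``marginal conditioning'' literally. The global score for $X^{(t_1)}$ is then a functional of the pair $(X^{(t_1)}, f(\X))$ under $p$ only. A concrete template is
\[
\Phi^f_{t_1} = \E_{p}\Bigl[\bigl|\E[f(\X)\mid X^{(t_1)}] - \E[f(\X)]\bigr|\Bigr],
\]
but the argument only uses that the score depends on the joint law of $(X^{(t_1)}, f(\X))$.

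\textbf{Part 1 (violation of C2).} I work in Scenario~I with the Gaussian DGP \eqref{eq:dgp}, taking $X^{(1)} \sim \mathcal N(0,\sigma^2)$. For $f_1 = \beta X^{(2)}$, Definition~\ref{def:model-dag} already gives that $\G_{f_1}$ lacks $X^{(1)}\to X^{(3)}$, while $\G_{f_2}$ contains it.

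The key step is to choose parameters so that the joint laws of $(X^{(1)}, f_1(\X))$ and $(X^{(1)}, \tilde f_2(\X))$ coincide. Here $\tilde f_2 = \beta' X^{(2)} + \gamma X^{(1)}$ with $\gamma \neq 0$. Both pairs are jointly Gaussian with mean zero, so I only need to match $\mathrm{Cov}(X^{(1)},\cdot)$ and $\mathrm{Var}(\cdot)$. The covariance condition is $\beta\delta\sigma^2 = (\beta'\delta+\gamma)\sigma^2$. The variance condition is $\beta^2(\delta^2\sigma^2+1) = (\beta'\delta+\gamma)^2\sigma^2 + \beta'^2$. Substituting the first into the second forces $\beta'^2 = \beta^2$. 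Taking $\beta' = -\beta$ then gives $\gamma = 2\beta\delta \neq 0$, which is admissible.

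Hence any score that is a functional of this pair assigns identical values to $f_1$ and $\tilde f_2$. Their graphs are non-isomorphic, since they differ in the edge $X^{(1)}\to X^{(3)}$. So for every $\tau$, either both models exceed the threshold or neither does, and the Claim's iff fails for one of them. In particular, the score under $f_1$ is nonzero whenever $\beta\delta\neq0$, even though C2 demands zero.

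\textbf{Part 2 (violation of C1).} I work in Scenario~II with $f_1$ and $f_3$. These agree on $\supp(p)$, so $\G_{f_1}=\G_{f_3}$ by Definition~\ref{def:model-dag}, because all conditional-independence statements are taken under $p$. The partial derivatives are $\partial f_1/\partial x^{(1)} \equiv 0$ and $\partial f_3/\partial x^{(1)} \equiv -\gamma\delta \neq 0$ everywhere, including on $\supp(p)$.

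For a gradient-based method I require only that its global score be a nondegenerate functional of $\partial f/\partial x^{(1)}$ along the evaluation points. Concretely, the score must vanish when the gradient is identically zero and be nonzero when the gradient is a nonzero constant. Saliency $\E|\partial_1 f|$ satisfies this. So do integrated gradients $(x^{(1)}-b^{(1)})\int_0^1 \partial_1 f\,d\alpha$ averaged over $p$ (nonzero as long as $p$ is nondegenerate in $x^{(1)}$ relative to the baseline) and SmoothGrad. Thus the two models receive different attributions despite agreeing on $\supp(p)$, which contradicts C1.

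The final sentence of the proposition then follows by combining the two parts. Each family fails at least one condition, so neither can guarantee both. The transition-tensor remark is definitional and relies on Definition~\ref{def:transition-tensor}.

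\textbf{Main obstacle.} I expect the hardest step to be Part~1's formalisation of ``reduces to marginal conditioning over a single input''. Without committing to the assumption that the score depends only on the law of $(X^{(t_1)}, f(\X))$, the identical-attribution claim is not provable. I would state this assumption explicitly as the defining property and then verify that the Gaussian matching above is exact. The sign choice $\beta'=-\beta$ is the one nontrivial calculation.
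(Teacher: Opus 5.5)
Your argument is correct. Part~2 is essentially the paper's argument: Scenario~II, with $\partial f_3/\partial x^{(1)}=-\gamma\delta$ against $\partial f_1/\partial x^{(1)}=0$. Part~1, however, takes a different and stronger route. The paper keeps $\beta$ fixed and evaluates one specific functional, obtaining $\Phi^{f_1}_1=|\beta\delta|\,\mathrm{MAD}(X^{(1)})$ and $\Phi^{f_2}_1=|\beta\delta+\gamma|\,\mathrm{MAD}(X^{(1)})$. It then sets $\gamma=-2\beta\delta$. That equality depends on the absolute value in $\Phi$. With this choice the conditional-mean shift of $f_2$ is the negative of that of $f_1$, so the joint laws of $(X^{(1)},f(\X))$ differ. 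A single-input score that is not sign-invariant can therefore tell the two models apart. The paper's extension to scores ``bounded below by a positive multiple of $\Phi$'' transfers only positivity, not equality. You instead flip the coefficient on $X^{(2)}$ ($\beta'=-\beta$, $\gamma=2\beta\delta$) and match the whole joint law. Identical attributions then hold for every functional of $(X^{(t_1)},f(\X))$, which is what the quantifier ``any method'' in the statement actually requires. You can also drop the Gaussian assumption on $X^{(1)}$ and the moment matching. Since $\tilde f_2=\beta\delta X^{(1)}-\beta\varepsilon$ and $(X^{(1)},-\varepsilon)\stackrel{d}{=}(X^{(1)},\varepsilon)$, the laws coincide for any law of $X^{(1)}$, so the construction lives inside Scenario~I exactly as stated.

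What the paper's explicit computation buys, and what you should keep explicit, is the strictly positive score under $f_1$. Equal scores by themselves only show that no threshold separates the two models, which is a DAG-faithfulness failure. A violation of (C2) proper needs a nonzero score in a case where $X^{(1)}\indep f_1(\X)\mid X^{(2)}$ holds. A degenerate functional such as the zero score lacks this property. Your template has it, by the one-line computation $\Phi^{f_1}_1=|\beta\delta|\,\mathrm{MAD}(X^{(1)})>0$, so your ``in particular'' sentence is valid only for the template. A similar caution applies in Part~2: aggregate integrated gradients in absolute value. The signed average $-\gamma\delta\,\E[X^{(1)}-b^{(1)}]$ vanishes when the baseline $b^{(1)}$ equals $\E[X^{(1)}]$, and then $f_1$ and $f_3$ would receive the same (zero) score.
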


\begin{proof}
We prove the two parts via independent constructions.

\paragraph{Part 1: violation of (C2) by marginal-conditioning methods.}

Assume Scenario I

\noindent The expected marginal influence
\begin{equation}
    \Phi^f_{t_1}
    = \E_{p}\Bigl[
        \bigl|\,\E_p\bigl[f(\X) \mid X^{(t_1)}\bigr]
        - \E_p\bigl[f(\X)\bigr]\,\bigr|
    \Bigr]
    \label{eq:marginal}
\end{equation}
is the natural global characterization of any method that conditions only on $X^{(t_1)}$. Computing $\Phi^{f_1}_{1}$ via \eqref{eq:dgp}:
\begin{align*}
    &\E_p\bigl[f_1(\X) \mid X^{(1)} = x\bigr] - \E_p\bigl[f_1(\X)\bigr] \\
    &\quad= \beta\,\E_p\bigl[X^{(2)} \mid X^{(1)} = x\bigr] - \beta\,\E_p\bigl[X^{(2)}\bigr] \\
    &\quad= \beta\,(\delta x + \E_p[\varepsilon]) - \beta\,\E_p\bigl[\delta X^{(1)} + \varepsilon\bigr] \\
    &\quad= \beta\delta\,x - \beta\delta\,\E_p\bigl[X^{(1)}\bigr] \\
    &\quad= \beta\delta\,\bigl(x - \E_p[X^{(1)}]\bigr),
\end{align*}
where we used $\E_p[X^{(2)} \mid X^{(1)} = x] = \delta x$ from \eqref{eq:dgp} and $\E[\varepsilon] = 0$. Taking the expectation over $X^{(1)}$:
\begin{align*}
    \Phi^{f_1}_{1}
    &= \E_p\Bigl[|\beta\delta|\,\bigl|X^{(1)} - \E_p[X^{(1)}]\bigr|\Bigr] \\
    &= |\beta\delta|\,\mathrm{MAD}(X^{(1)}) > 0,
\end{align*}
where $\mathrm{MAD}$ denotes the mean absolute deviation, which is strictly positive since $\mathrm{Var}(X^{(1)}) > 0$ by assumption, and $\beta\delta \neq 0$. $\Phi^{f_1}_{1} > 0$ even though the edge $X^{(1)} \to X^{(3)}$ is absent from $\G_{f_1}$. The spurious attribution arises from temporal autocorrelation: marginal conditioning cannot separate the indirect path $X^{(1)} \to X^{(2)} \to f_1$ from a direct effect, which is exactly a violation of (C2). Similarly, computing $\Phi^{f_2}_{1}$:
\[
    \Phi^{f_2}_{1} = |\beta\delta + \gamma|\,\mathrm{MAD}(X^{(1)}).
\]
Setting $\gamma = -2\beta\delta$ yields $\Phi^{f_1}_{1} = \Phi^{f_2}_{1}$. The collapse propagates to any method whose global score is bounded below by a positive multiple of $\Phi^f_{t_1}$, including SHAP, LIME, and the temporally-aware perturbation methods (TimeSHAP, ShapTime) that condition perturbations on coalitions or windows of features rather than on the conditional distribution given the mediator.

\paragraph{Part 2: violation of (C1) by gradient methods.}

\begin{remark}[Gradient-based edge detection]
\label{rem:gradient-edge}
For a differentiable $f$, a nonzero expected absolute gradient
\[
\E_{p(\mathbf{x})}\!\left[\left| \frac{\partial f(\mathbf{x})_j}{\partial x_i^{(t_1)}} \right|\right] > 0
\]
is \emph{sufficient} but not necessary for the edge $X_i^{(t_1)} \to X_j^{(T+1)}$ to be present in $\G_f$. Sufficiency follows because a nonzero expected absolute gradient precludes conditional independence under $p(\mathbf{x})$. The converse fails in general: conditional dependence may arise through higher-order interactions not captured at first order, and gradients evaluated off the support of $p(\mathbf{x})$ may register sensitivity that does not correspond to on-manifold dependence (cf.\ the reparametrization construction in the proof of Proposition~\ref{prop:insufficiency}).
\end{remark}

Gradient-based methods probe $f$ via partial derivatives at points in input space. We use a separate construction in which the data manifold is low-dimensional, so off-manifold directions are unambiguous. 

Assume Scenario II,

The partial derivative
\[
\frac{\partial f_3}{\partial x^{(1)}} = -\gamma\delta \neq 0
\]
is nonzero everywhere, including at every $\mathbf{x} \in \supp(p)$. Gradient-based methods such as vanilla saliency, integrated gradients, and SmoothGrad assign nonzero attribution to $X^{(1)}$ under $f_3$ despite the absence of an edge. This is a violation of (C1): the method's score depends on $f$'s extension off $\supp(p)$, registering sensitivity in directions orthogonal to the manifold that play no role in the model's on-manifold dependency structure.

\end{proof}







\end{document}